\documentclass{article}
\usepackage[sectionbib]{natbib}
\bibliographystyle{agsm}
\usepackage{hyperref}

\usepackage{arxiv}
\usepackage{bm}
\usepackage[utf8]{inputenc} 
\usepackage[T1]{fontenc}    
\usepackage{url}            
\usepackage{booktabs}       
\usepackage{amsfonts}       
\usepackage{nicefrac}       
\usepackage{microtype}      
\usepackage{amsmath}
\usepackage{lipsum}
\usepackage{amsthm}
\usepackage{cleveref}
\usepackage{graphicx,psfrag,epsf}
\usepackage[ruled,vlined]{algorithm2e}
\usepackage{enumitem}
\usepackage{caption}
\usepackage{sectsty, secdot}

\setcounter{page}{1}
\newtheorem{theorem}{Theorem}
\newtheorem{lemma}{Lemma}

\theoremstyle{definition}

\newtheorem{example}{Example}

\DeclareMathOperator*{\argmin}{\arg\!\min}

\title{Distributional Clustering: A distribution-preserving clustering method}

\author{
  Arvind Krishna\\
  H. Milton Stewart School of Industrial \& Systems Engineering\\
  Georgia Institute of Technology\\
  Atlanta, GA 30332 \\
  \texttt{akrishna39@gatech.edu} \\
   \And
 Simon Mak \\
 Department of Statistical Science\\
  Duke University\\
  Durham, NC 27708 \\
  \texttt{sm769@duke.edu} \\
  \AND
  Roshan Joseph \\
  H. Milton Stewart School of Industrial \& Systems Engineering\\
  Georgia Institute of Technology\\
  Atlanta, GA 30332 \\
  \texttt{roshan@gatech.edu} \\
}

\begin{document}
\maketitle

\begin{abstract}
One key use of k-means clustering is to identify cluster prototypes which can serve as representative points for a dataset. However, a drawback of using k-means cluster centers as representative points is that such points distort the distribution of the underlying data. This can be highly disadvantageous in problems where the representative points are subsequently used to gain insights on the data distribution, as these points do not mimic the distribution of the data. To this end, we propose a new clustering method called ``distributional clustering'', which ensures cluster centers capture the distribution of the underlying data. We first prove the asymptotic convergence of the proposed cluster centers to the data generating distribution, then present an efficient algorithm for computing these cluster centers in practice. Finally, we demonstrate the effectiveness of distributional clustering on synthetic and real datasets.
\end{abstract}

\keywords{Cluster prototypes \and Data reduction \and K-means clustering \and Representative points}

\renewcommand{\theequation}{\thesection.\arabic{equation}}
\setcounter{section}{0} 
\setcounter{equation}{0} 
\section{Introduction}
Clustering is widely used in a variety of statistical and machine learning applications. These applications can be roughly split into two objectives \citep{tan2016introduction}: (i) the identification of homogeneous groups within a dataset, and (ii) the summarization of a dataset into ``representative points'' or ``cluster prototypes''. Objective (i) spans a broad range of real-world problems, including the discovery of gene groups which have similar biological functions, the identification of communities in social networks, and so on. Similarly, objective (ii) covers many important problems in the ever-present world of big data, including data summarization (the reduction of a dataset for expensive downstream computations) and data compression (the representation of a dataset by its prototypes). We will focus on the latter objective in this work.

K-means clustering \citep*{gan2007data} is one of the most widely used clustering methods. Let $\{\bm{x}_j\}_{j=1}^N  \subseteq \mathbb{R}^p$ be a $p$-dimensional dataset of $N$ points, and suppose $n < N$ cluster prototypes are desired from this data. K-means clustering chooses these prototypes as the following optimal cluster centers:
\begin{equation}	
\mathcal{D}_{n} := \argmin_{\mathcal{D}: \# \mathcal{D}=n} \frac{1}{N} \sum_{j=1}^{N} \|\bm{x}_j - Q(\bm{x}_j; \mathcal{D})\|_2^2,
\label{eq:kmean_obj_dis}
\end{equation}
where $\mathcal{D} = \{\bm{d}_1, \cdots, \bm{d}_n\}$ are the $n$ cluster centers to be optimized, and $Q(\bm{x}; \mathcal{D}) := \argmin_{\bm{d} \in \mathcal{D}} \|\bm{x}-\bm{d}\|_2$ returns the closest point in $\mathcal{D}$ to $\bm{x}$. In other words, these optimal centers minimize the sum of the squared distances between it and its closest points in the data. The clustering criterion \eqref{eq:kmean_obj_dis} is known as the \textit{within-cluster sum-of-squares criterion} \citep{pollard1981strong}. This k-means approach for generating representative points has been widely used (with minor variations) in statistics (e.g., \cite{dalenius1950problem}, the principal points in \cite{flury1990principal}, the mse-rep-points in \cite{fang1994number}), signal processing \citep{lloyd1982least}, and stochastic programming \citep{heitsch2003scenario}.

Although k-means clustering is an intuitive and easy-to-implement clustering method, one key weakness is that the cluster prototypes in \eqref{eq:kmean_obj_dis} distort the distribution of the underlying data. To demonstrate this, we perform k-means clustering on a $N$ = 100,000-point dataset, simulated from the univariate standard normal distribution, to obtain $n$ = 100 representative points. Figure \ref{fig:motivation} shows the density of the data generating distribution (call this $f$), along with the kernel density estimate of the k-means cluster centers. We see that the latter density is much more heavy-tailed than the former density, which suggests that the k-means cluster prototypes distort the true data distribution $f$. In fact, this distortion can be quantified theoretically: one can show (see \cite{zador1982asymptotic}, \cite{su2000asymptotically} and Theorem 7.5 of \cite{GL2007}) that the density of k-means cluster centers converges to $f^{p/(p+2)}$ as the number of prototype points $n \rightarrow \infty$. In other words, even with a large number of points, the distribution of k-means centers will not converge to the underlying data distribution!
\begin{figure}[t]
\centering
\includegraphics[width=0.65\textwidth]{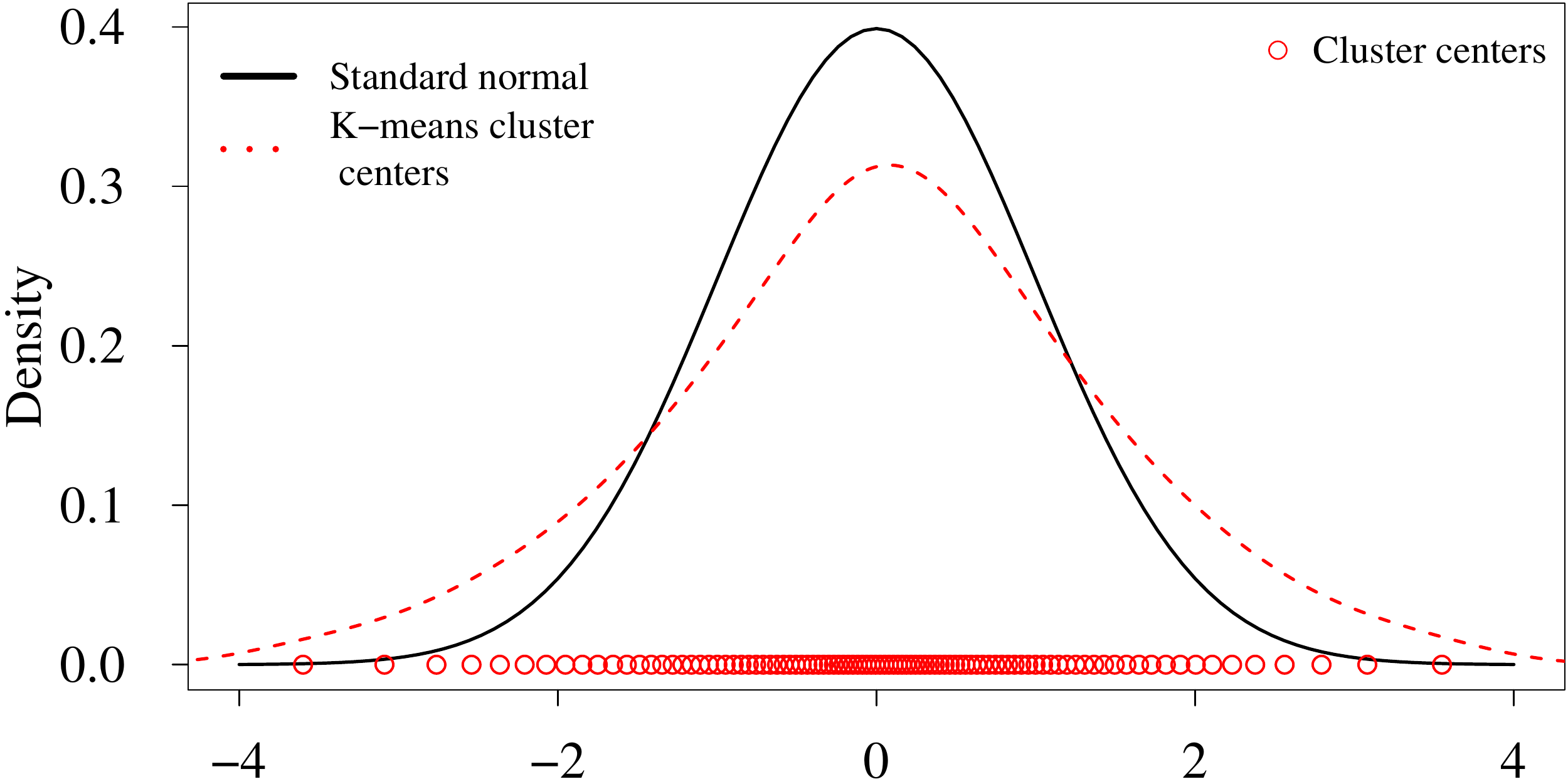}
\caption{Comparison between the data density (solid) and kernel density estimate (dashed) of the k-means cluster centers (circled points). }
\label{fig:motivation}
\end{figure}

This distribution distortion can be detrimental in many applications of cluster prototyping. For example, consider the problem of data summarization, where a large dataset is reduced to a smaller representative dataset, to be used for time-consuming downstream computations. Such computations may involve fitting complex models to data, or conducting expensive experiments at each data point (e.g., in uncertainty quantification \citep{smith2013uncertainty}). Here, a distribution distortion can result in highly biased insights from downstream computations, in that the results using the summarized data may deviate greatly from results using the full data. In this sense, the representative points from k-means clustering may yield undesired and incorrect results when used for data summarization.

To address this, we propose a new clustering method called ``distributional clustering'', whose cluster centers preserve the underlying data distribution. The key idea is to generalize the within-cluster sum-of-squares criterion in \eqref{eq:kmean_obj_dis}, to a new \textit{within-cluster sum-of-powers criterion}, which minimizes the sum of distances to a positive power $k$ within each cluster. We first show that, as power $k \rightarrow 0^+$, the optimal cluster centers from this new clustering criterion indeed converge to the true data-generating distribution as the number of clusters tend to infinity; this addresses the distortion problem of k-means. We then propose a modification of Lloyd's algorithm (a widely-used algorithm for solving the k-means criterion \eqref{eq:kmean_obj_dis}) to compute these distributional clustering centers in practice. Finally, we demonstrate the effectiveness of distributional clustering in several numerical examples. 

\setcounter{equation}{0} 
\section{Distributional clustering}
To address the distribution distortion of k-means cluster centers, we begin by modifying the within-cluster sum-of-squares criterion in (\ref{eq:kmean_obj_dis}), by raising the Euclidean distance between each observation and its corresponding cluster center to $k$th power (where $k > 0$). This new clustering criterion, which we call the \textit{within-cluster sum-of-powers criterion}, can be written as follows:
\begin{equation}	
\mathcal{D}_{n,k} := \argmin_{\mathcal{D}: \# \mathcal{D}=n} \frac{1}{N} \sum_{j=1}^{N} \|\bm{x}_j - Q(\bm{x}_j; \mathcal{D})\|_2^k \;.
\label{eq:wcsp}
\end{equation}
Note that, for $k = 2$, the criterion in (\ref{eq:wcsp}) reduces to the k-means clustering criterion in (\ref{eq:kmean_obj_dis}). 

To aid the derivation of our method, we will assume for now that $N = \infty$, i.e., an infinite amount of data is available on the population. Letting $f(\cdot)$ be the data generating density function, the above clustering criterion can be generalized as:
\begin{equation}
\mathcal{D}_{n,k}^* := \argmin_{\mathcal{D}: \# \mathcal{D} = n} V_k(\mathcal{D};f),
\label{eq:optnk}
\end{equation}
where:
\begin{equation}
V_{k}(\mathcal{D}; f) := \bigg[ \int \|\bm{x} - Q(\bm{x}; \mathcal{D})\|_2^k f(\bm{x}) \; d\bm{x} \bigg]^{1/k}.
\label{eq:qerr}
\end{equation}
Here, `*' is used to denote the setting of infinite data ($N = \infty$). Note that the power $1/k$ is added to \eqref{eq:qerr}; this does not affect the optimization of the cluster centers (since the transformation is monotonic), but will aid in subsequent derivations.

The criterion in (\ref{eq:optnk}) has been studied in the signal processing literature, particularly for vector quantization. \cite{GL2007} showed that the cluster centers in \eqref{eq:optnk} converge asymptotically to a distribution with density $f^{p/(p+k)}$ as the number of cluster centers $n \rightarrow \infty$\footnote{From here on, the notion of asymptotics refers to the number of cluster centers $n \rightarrow \infty$.}. This suggests that as $k \rightarrow 0$, the density of $\mathcal{D}_{n,k}^*$ becomes closer to the data-generating density $f$. This motivates the following \textit{distributional clustering criterion}:
\begin{equation}
\mathcal{D}_{n,0}^* := \argmin_{\mathcal{D}: \# \mathcal{D} = n} V_0(\mathcal{D};f),
\label{eq:optn}
\end{equation}
where:
\begin{equation}
V_0(\mathcal{D};f) := \lim_{k \rightarrow 0^+} V_k(\mathcal{D}; f).
\label{eq:v0}
\end{equation}
The following theorem gives a closed-form expression for the limiting objective function, $V_{0}(\mathcal{D}; f)$, in (\ref{eq:v0}).

\begin{theorem}[Within-cluster sum-of-limiting-power]
For any $\mathcal{D} \subseteq \mathbb{R}^p$, $\# \mathcal{D} = n$, we have:
\begin{equation}
V_0(\mathcal{D};f) = \exp\left\{ \int \log \|\bm{x} - Q(\bm{x};\mathcal{D})\|_2 f(\bm{x}) \; d\bm{x} \right\}.
\label{eq:v02}
\end{equation}
\label{thm:v0}
\end{theorem}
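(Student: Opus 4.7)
The plan is to view $V_k(\mathcal{D};f)$ as the order-$k$ power mean (under the probability measure $f(\bm{x})\,d\bm{x}$) of the nonnegative function $g(\bm{x}) := \|\bm{x} - Q(\bm{x};\mathcal{D})\|_2$, and to invoke the classical fact that the order-$k$ power mean of a positive random variable tends to its geometric mean as $k \to 0^+$.

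First, I would take logarithms and work with
\[
\log V_k(\mathcal{D};f) \;=\; \frac{1}{k}\log \int g(\bm{x})^k f(\bm{x}) \, d\bm{x}.
\]
Since $g > 0$ almost everywhere (because $\mathcal{D}$ is a finite set), we have $g^k \to 1$ pointwise a.e.\ as $k \to 0^+$, and dominated convergence gives $\int g^k f\, d\bm{x} \to 1$. Thus the expression is of the indeterminate form $0/0$, to which I would apply L'H\^{o}pital's rule in $k$. Differentiating under the integral sign yields
\[
\lim_{k \to 0^+} \log V_k(\mathcal{D};f)
 \;=\; \lim_{k \to 0^+}\frac{\int g^k (\log g)\, f \, d\bm{x}}{\int g^k f \, d\bm{x}}
 \;=\; \int (\log g)\, f \, d\bm{x},
\]
where the final equality is once more a dominated-convergence argument and the denominator tends to $1$. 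Exponentiating both sides then recovers \eqref{eq:v02}.

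The main obstacle is justifying the two interchanges of limit and integral, i.e.\ producing an integrable envelope for $g^k \log g$ that is uniform in $k$ on a neighborhood of $0$. On the region $\{g \le 1\}$ (data close to the cluster centers) one has $|g^k \log g| \le |\log g|$, so it suffices that $\log \|\bm{x} - \bm{d}\|_2$ be locally integrable near each $\bm{d} \in \mathcal{D}$; since $f$ is a density and $p \ge 1$, this holds whenever $f$ is locally bounded, as $\int_{B_\epsilon(\bm{d})} |\log \|\bm{x}-\bm{d}\|_2|\, d\bm{x}$ is finite in $\mathbb{R}^p$. On the tail region $\{g \ge 1\}$ one has $g^k |\log g| \le g^{k_0}|\log g|$ for any $k \le k_0$, and this is integrable under a mild moment hypothesis on $f$ — a standard assumption in the quantization setting of \cite{GL2007}. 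Under these (implicit) regularity conditions the interchanges are legitimate and the closed form in \eqref{eq:v02} follows.
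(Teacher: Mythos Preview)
Your proof is correct and follows essentially the same route as the paper: take logarithms, recognize the $0/0$ indeterminate form, apply L'H\^{o}pital's rule with differentiation under the integral sign, and exponentiate. The paper's version is terser and does not spell out the dominated-convergence justifications or the integrability conditions you supply for the limit interchanges.
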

\noindent The proof is given in Appendix A.

The key advantage of Theorem \ref{thm:v0} is that, as we show later in Section 4, it provides us a closed-form expression for the distributional clustering criterion (\ref{eq:optn}) in the finite data setting of $N < \infty$. We will call the log-term $\log \|\bm{x} - Q(\bm{x};\mathcal{D})\|_2$ in \eqref{eq:v02} as the \textit{log-potential} of point $\bm{x}$ with respect to center $Q(\bm{x};\mathcal{D})$; this is motivated by a similar log-potential distance used in the experimental design literature \citep{dette2010generalized}.

\setcounter{equation}{0} 
\section{Distributional convergence}
With the distributional clustering criterion (\ref{eq:optn}) in hand, we now show that the optimal cluster centers $\mathcal{D}_{n,0}^*$ indeed fixes the distribution distortion problem from k-means, i.e., the empirical distribution of these centers converges asymptotically to the underlying data distribution. Let $F(\bm{x})$, $\bm{x} \subseteq \mathbb{R}^p$ be the data-generating distribution function (assumed to be continuous), with density function $f(\bm{x})$. Let $F_{n,k}$ and $F_{n,0}$ denote the empirical distribution functions (e.d.f.s) for the optimal cluster centers in \eqref{eq:optnk} and \eqref{eq:optn}, respectively. Also, let $F_{\infty,k}(\bm{x})$ denote the distribution function with density proportional to $f^{p/(p+k)}(\bm{x})$. The following theorem shows that, under mild regularity conditions on $F$, the distribution of the proposed cluster centers, $\mathcal{D}_{n,0}^*$, converges asymptotically to the desired distribution $F$. 
\begin{theorem}[Distributional convergence of distributional clustering]
Suppose the data distribution $F$ satisfies the mild assumptions:
\begin{itemize}
\item [(\textbf{A1})] (Moment):
$\mathbb{E}\|\bm{X}\|^{k+\delta} < \infty$, $\bm{X} \sim F$ for some $\delta \geq 1$, and $k > 0$.
\item [(\textbf{A2})] (Uniform convergence):
$\lim\limits_{n \rightarrow \infty} \sup\limits_{\bm{x} \in \mathbb{R}^p} |F_{n,k}(\bm{x})-F_{\infty,k}(\bm{x})| = 0$ uniformly over $k \in [0,\xi)$ for some $\xi > 0$.
\end{itemize}
Then $F_{n,0}$, the e.d.f of $n$ distributional clustering centers, satisfies:
\begin{equation}
\lim_{n \rightarrow \infty} F_{n,0}(\bm{x}) = F(\bm{x}) \text{\quad for all \quad} \bm{x} \in \mathbb{R}^p.
\label{eq:conv}
\end{equation}
\label{thm:conv}
\end{theorem}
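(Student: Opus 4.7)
The plan is to introduce an intermediate exponent $k>0$ and decompose the error by the triangle inequality:
\begin{equation*}
|F_{n,0}(\bm{x}) - F(\bm{x})| \;\leq\; \underbrace{|F_{n,0}(\bm{x}) - F_{n,k}(\bm{x})|}_{(\mathrm{I})} + \underbrace{|F_{n,k}(\bm{x}) - F_{\infty,k}(\bm{x})|}_{(\mathrm{II})} + \underbrace{|F_{\infty,k}(\bm{x}) - F(\bm{x})|}_{(\mathrm{III})},
\end{equation*}
and to show that each piece can be made arbitrarily small by first choosing $k$ close to $0$ and then taking $n$ large.

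\textbf{Steps.} For (III), I would use that $F_{\infty,k}$ has density proportional to $f^{p/(p+k)}$; as $k\to 0^+$, $f^{p/(p+k)}\to f$ pointwise, and a dominated convergence argument (with the moment bound (A1) supplying tail control and normalization of the numerator) yields $F_{\infty,k}(\bm{x}) \to F(\bm{x})$. For (I), the idea is continuity of the argmin in the exponent $k$ at $0$: by Theorem~\ref{thm:v0}, $V_k(\mathcal{D};f)\to V_0(\mathcal{D};f)$ as $k\to 0^+$ for each $\mathcal{D}$, and a standard $\Gamma$-convergence or compactness argument on the finite-dimensional set of $n$-point configurations then gives $\mathcal{D}_{n,k}^*\to\mathcal{D}_{n,0}^*$ (up to relabeling) and hence $F_{n,k}\to F_{n,0}$ for each fixed $n$. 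For (II), assumption (A2) delivers $\sup_{\bm{x}}|F_{n,k}(\bm{x})-F_{\infty,k}(\bm{x})|\to 0$ as $n\to\infty$, uniformly over $k\in[0,\xi)$. Combining: given $\varepsilon>0$, pick $k\in(0,\xi)$ small enough that (I) and (III) are each below $\varepsilon/3$, then pick $n$ large enough that (II) is below $\varepsilon/3$ uniformly in $k$; this gives $|F_{n,0}(\bm{x})-F(\bm{x})|<\varepsilon$, completing the pointwise convergence claim.

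\textbf{Main obstacle.} The crux is the interchange of the two limits $n\to\infty$ and $k\to 0^+$, which is precisely what the uniformity in (A2) provides; without it, term (II) could degrade as $k\to 0^+$ and the triangle decomposition would not close. A secondary difficulty is the continuity step in (I), which technically requires handling possible non-uniqueness of the minimizers of $V_k$ via a selection or subsequence argument. In fact, since (A2) is stated on $[0,\xi)$ and includes $k=0$ directly, a streamlined proof simply specializes (A2) to $k=0$ and notes that $F_{\infty,0}$ has density proportional to $f^{p/(p+0)}=f$, whence $F_{\infty,0}=F$ and the conclusion is immediate; the three-term decomposition above then serves mainly to expose \emph{why} the uniformity hypothesis is the essential ingredient. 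The moment condition (A1) plays the auxiliary role of guaranteeing the Graf--Luschgy-type limits underlying (A2) are well-defined and that $f^{p/(p+k)}$ is integrable uniformly near $k=0$.
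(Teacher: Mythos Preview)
Your proposal is essentially the paper's argument, repackaged as an explicit $\varepsilon/3$ triangle inequality rather than an appeal to abstract iterated/double-limit theorems. The paper's Appendix~B records (i) $F_{n,k}\to F_{\infty,k}$ for each $k>0$ (Graf--Luschgy, under (A1)), uniformly in $k$ by (A2), and (ii) $F_{\infty,k}\to F$ as $k\to0^+$ via Scheff\'e's lemma (your dominated-convergence step for (III)); it then proves an exchange-of-limits lemma $\lim_{k\to0^+}\lim_{n\to\infty}=\lim_{n\to\infty}\lim_{k\to0^+}$ by invoking Moore--Osgood-type results (Theorems~2.13 and~2.15 of Habil). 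So the ingredients, and your identification of the crux as the limit interchange furnished by the uniformity in (A2), match the paper exactly.

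One technical wrinkle in your $\varepsilon/3$ ordering: you ``pick $k$ small so that (I) is below $\varepsilon/3$'' and \emph{then} choose $n$. But term (I) $=|F_{n,0}(\bm{x})-F_{n,k}(\bm{x})|$ depends on $n$, and your continuity-of-argmin step only gives $F_{n,k}\to F_{n,0}$ as $k\to 0^+$ for \emph{fixed} $n$, not uniformly in $n$; indeed for fixed $k>0$ one expects $|F_{n,0}-F_{n,k}|\to|F-F_{\infty,k}|\neq0$ as $n\to\infty$, so (I) cannot be controlled by first freezing $k$. This is precisely why the paper routes through the formal double-limit theorem: uniformity of the $n$-limit in $k$ (which (A2) supplies) together with mere existence of the $k$-limit for each fixed $n$ suffices for the iterated limits to coincide, without ever needing (I) to be small uniformly in $n$. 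Your ``streamlined'' remark---that (A2) read literally at $k=0$ already asserts $F_{n,0}\to F_{\infty,0}=F$---is correct and shows the theorem as stated is close to tautological; the paper's longer route is really a justification of why (A2) should extend down to $k=0$.
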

\noindent Assumption \textbf{(A1)} is a standard moment assumption on the data distribution $F$. Assumption \textbf{(A2)} concerns the uniform convergence of $F_{n,k}$ to $F_{\infty,k}$, and holds for densities $f$ which are not too heavy-tailed, for example, the exponential and power distributions \citep{FP2002}. Note that pointwise convergence of $F_{n,k}$ to $F_{\infty,k}$ is already proven in Theorem 7.5 of \cite{GL2007}. The proof for this theorem is given in Appendix B.

Theorem \ref{thm:conv} shows that, by minimizing the new clustering criterion in \eqref{eq:optn}, the resulting cluster centers capture the data distribution as the number of representative points $n \rightarrow \infty$. This shows that the proposed distributional clustering method indeed addresses the key problem of distribution distortion for k-means clustering.

\setcounter{equation}{0} 
\section{Cluster center optimization for distributional clustering}

Next, we present next an optimization algorithm for efficiently computing these new cluster centers in practice. This algorithm is described in three parts. First, we present a clustering algorithm for solving \eqref{eq:optn} in the practical setting of finite data (i.e., $N < \infty$). Second, we propose an adjustment which leads to better distribution representation in the case when the number of representative points $n$ is small. Third, we combine both parts to obtain a general optimization algorithm for distributional clustering.

\subsection{Optimizing the within-cluster sum-of-log-potential criterion}

We begin by presenting an optimization algorithm for computing the cluster centers in \eqref{eq:optn} in the case of finite data (i.e., $N < \infty$). Recall that Theorem \ref{thm:conv} shows the empirical distribution of $\mathcal{D}_{n,0}^*$ converges asymptotically to the underlying data distribution $F$. To obtain $\mathcal{D}_{n,0}^*$, we therefore need to solve the optimization problem in \eqref{eq:optn}. Since $f(\cdot)$ is typically unknown in practice, we approximate the integral in \eqref{eq:optn} with a finite sample average over the data we have available. This yields the following optimization problem:
\begin{equation}
\argmin_{\mathcal{D}: \# \mathcal{D} = n} \frac{1}{N} \sum_{j=1}^{N} \log \Big[ \left\Vert\left. \bm{x}_j-Q(\bm{x}_j; \mathcal{D}) \right\Vert\right._2 + \delta \Big].
\label{eq:monte}
\end{equation}
Here, a small nugget parameter $\delta > 0$ is added to \eqref{eq:monte} to avoid singularity of the objective function when $\bm{x}_j = Q(\bm{x}_j; \mathcal{D})$. Letting $\mathcal{D} = \{\bm{d}_1,...,\bm{d}_n\}$, where $\bm{d}_i$ is the $i$th cluster center to be optimized, then \eqref{eq:monte} can be written as:
\begin{equation}
\argmin_{\bm{d}_1,...,\bm{d}_n; \bm{W}} \sum_{i=1}^{n} \sum_{j=1}^{N} w_{ij} \log \Big[ \left\Vert\left. \bm{x}_j-\bm{d}_i \right\Vert\right._2 + \delta \Big],
\label{eq:clusters}
\end{equation}
where $w_{ij}$ is a binary decision variable indicating the assignment of point $j$ to cluster $i$, and $\bm{W} = {(w_{ij})_{i=1}^n}_{j=1}^N$ is the $n$ x $N$ matrix corresponding to $w_{ij}$'s. Note that the constant $1/N$ is removed from \eqref{eq:clusters} since it does not affect the optimization. 

The optimization problem in \eqref{eq:clusters} is now quite similar to the k-means criterion in \eqref{eq:kmean_obj_dis}. Thus, to optimize \eqref{eq:clusters}, we will use a novel extension of Lloyd's algorithm \citep{lloyd1982least}, which is widely used for optimizing the k-means clustering problem \eqref{eq:kmean_obj_dis}. The idea behind Lloyd's algorithm is as straight-forward. Beginning with an initial set of cluster centers (which are randomly sampled from the data), the first step is to (a) assign each data point to its nearest cluster center in Euclidean norm; this optimizes the assignment decision variables $\bm{W}$ given fixed centers. Next, for each cluster, (b) update its cluster center as the point which minimizes the sum of squared distances with all cluster points; this optimizes the cluster centers $\bm{d}_1, \cdots, \bm{d}_n$ given fixed cluster assignments. For \eqref{eq:kmean_obj_dis}, the latter step amounts to simply updating each cluster center as the mean of its cluster points. Both optimization steps are then repeated iteratively until convergence.

Since the proposed criterion in \eqref{eq:clusters} is similar to the k-means criterion in \eqref{eq:kmean_obj_dis}, we can extend Lloyd's iterative algorithm for optimizing \eqref{eq:clusters}. The key modification is in step (b) of the above description. Define first the log-potential of the $i$th cluster as:
\begin{equation}
LP_i(\bm{d}) = \sum_{j=1}^{N_i} \log \Big[ \left\Vert\left. \bm{x}_{ij}-\bm{d} \right\Vert\right._2 + \delta \Big],
\label{eq:lp}
\end{equation}
where, in the $i$th cluster, $N_i$ is the number of points and $\bm{x}_{ij}$ is its $j$th point. Note that the log-potential is defined given fixed cluster assignments. In view of \eqref{eq:clusters}, the only modification required in step (b) is that we instead update cluster centers by minimizing the log-potential in \eqref{eq:lp}. 


Unfortunately, unlike k-means, the minimum of the log-potential \eqref{eq:lp} has no closed-form expression. However, this minimization can be greatly simplified by the following intuition. Suppose we have a one-dimensional dataset, and the $i$th cluster contains three points: $x_{i1} = 0.1, x_{i2} = 0.4$ and $x_{i3} = 0.9$. Figure \ref{fig:log_pot} plots the log-potential $\sum_{j=1}^{3} \log [\left\Vert\left. x_{ij}-d_i \right\Vert\right._2 + \delta ]$ with a small nugget term $\delta = 0.01$. From this, the local minima of the log-potential appears to occur at the three data points within the cluster. Theorem \ref{thm:lp_minima} shows that this is indeed the case for sufficiently small $\delta$.
\begin{figure}[t]
\centering
\includegraphics[width=0.75\textwidth]{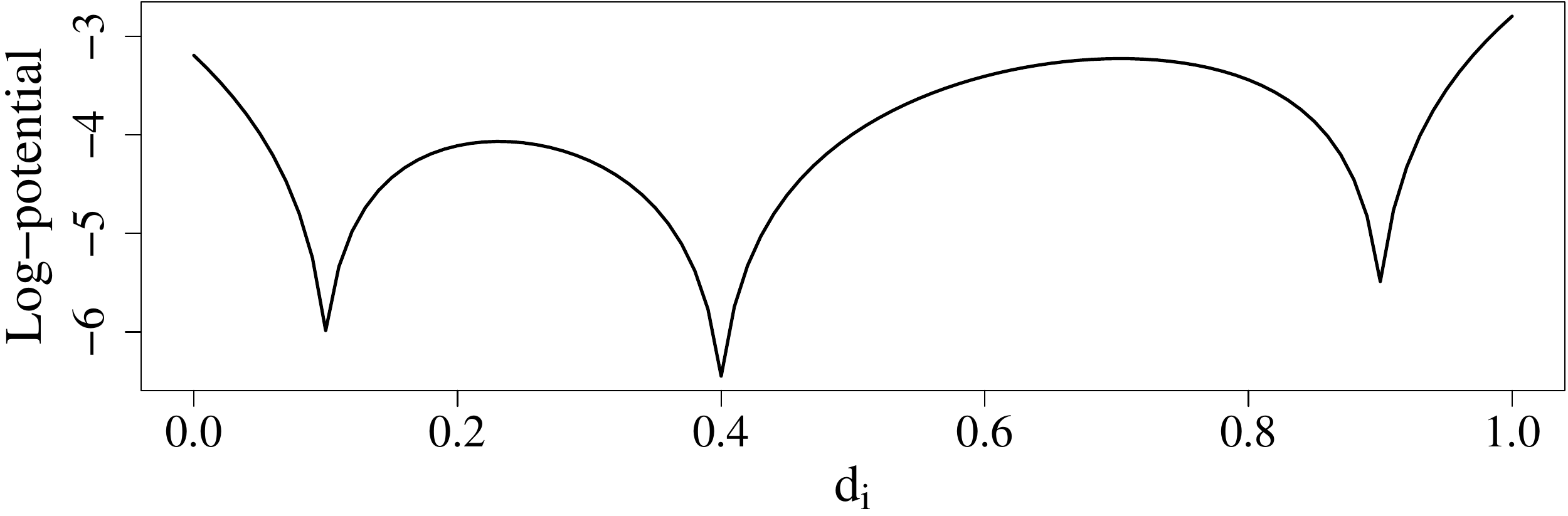}
\caption{Log-potential $LP(d)$ as a function of cluster center $d$.}
\label{fig:log_pot}
\end{figure}
\begin{theorem}[Log-potential minima]
For any $\mathcal{X} = \{\bm{x}_1,...,\bm{x}_N\}$, there exists a $\delta^*$, such that for all $\delta \in (0,\delta^*)$, the global minimum $\bm{d}^*$ of the log-potential $LP(\bm{d}) = \sum_{j=1}^{N} \log \Big[ \left\Vert\left. \bm{x}_j-\bm{d} \right\Vert\right._2 + \delta \Big]$ is found in $\mathcal{X}$.
\label{thm:lp_minima}
\end{theorem}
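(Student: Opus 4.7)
The plan is to argue in two stages. First, I would use the singularity of the $\log$ term at each data point to confine any global minimizer to the union of small balls $B_\rho(\bm{x}_j)$ around the data points. Second---the harder step---I would show that within each such ball, $LP(\bm{d})$ is strictly minimized at $\bm{d} = \bm{x}_j$. Together, these yield $\bm{d}^* \in \mathcal{X}$.

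For the first stage, let $\epsilon := \min\{\|\bm{x}_i - \bm{x}_j\|_2 : \bm{x}_i \neq \bm{x}_j\} > 0$ and fix $\rho \in (0, \epsilon/2)$ so that the balls $B_\rho(\bm{x}_j)$ are pairwise disjoint. For any $\bm{d}$ outside $\bigcup_j B_\rho(\bm{x}_j)$, each $\|\bm{x}_j - \bm{d}\|_2 \geq \rho$ and hence $LP(\bm{d}) \geq N \log \rho$. On the other hand, $LP(\bm{x}_j) = \log \delta + \sum_{i \neq j} \log[\|\bm{x}_i - \bm{x}_j\|_2 + \delta]$, which diverges to $-\infty$ as $\delta \to 0^+$. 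Thus for $\delta$ below some $\delta_1 > 0$, $\min_j LP(\bm{x}_j) < N \log \rho$, forcing every global minimizer into $\bigcup_j B_\rho(\bm{x}_j)$.

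For the second stage, fix $j$ and $\bm{d} \in B_\rho(\bm{x}_j)$, and set $r := \|\bm{x}_j - \bm{d}\|_2$. I would decompose
\begin{equation*}
LP(\bm{d}) - LP(\bm{x}_j) = \log(1 + r/\delta) + \sum_{i \neq j} \bigl[ \log(\|\bm{x}_i - \bm{d}\|_2 + \delta) - \log(\|\bm{x}_i - \bm{x}_j\|_2 + \delta) \bigr].
\end{equation*}
The first term is the ``cost'' of leaving $\bm{x}_j$, with initial slope $1/\delta$ blowing up as $\delta \to 0$; the sum is a bounded smooth perturbation. Since $\|\bm{x}_i - \bm{d}\|_2 \geq \epsilon/2$ for $i \neq j$, combining the Lipschitz bound $|\log(a+\delta) - \log(b+\delta)| \leq |a-b|/(\min(a,b)+\delta)$ with the reverse triangle inequality bounds the sum in absolute value by $Cr$, where $C := 2(N-1)/\epsilon$. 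Writing $g(r) := \log(1 + r/\delta) - Cr$, one has $g(0) = 0$ and $g'(r) = (\delta + r)^{-1} - C$, so whenever $\delta + \rho < 1/C$ we get $g'(r) > 0$ on $[0, \rho]$ and hence $g(r) > 0$ on $(0, \rho]$. This forces $LP(\bm{d}) > LP(\bm{x}_j)$ for every $\bm{d} \in B_\rho(\bm{x}_j) \setminus \{\bm{x}_j\}$. Taking $\rho$ and $\delta$ both below $1/(2C)$, and additionally $\delta < \delta_1$, yields an explicit $\delta^*$ such that the global minimum coincides with some $\bm{x}_j \in \mathcal{X}$ for every $\delta \in (0, \delta^*)$. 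The main technical obstacle is precisely this second stage---matching the $\log(1 + r/\delta)$ blow-up against the linear Lipschitz perturbation uniformly over $r \in (0, \rho]$.
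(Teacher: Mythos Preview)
Your argument is correct and in fact more rigorous than the paper's own proof. The paper proceeds by fixing an arbitrary $\bm{d}' \notin \mathcal{X}$, writing $LP(\bm{x}_k) - LP(\bm{d}') = \log\delta + a_k - a_{d'}$ with $a_k, a_{d'}$ treated as real constants, and then observing that this is negative once $\delta$ is small enough. That argument never addresses uniformity in $\bm{d}'$: both $a_k$ and $a_{d'}$ depend on $\delta$, and as $\bm{d}'$ approaches some $\bm{x}_j$ the quantity $a_{d'}$ itself tends to $-\infty$, so no single $\delta^*$ can be extracted from the paper's reasoning as written. Your two-stage decomposition is precisely what fills this gap. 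Stage~1 localizes any minimizer to a fixed $\rho$-ball around some data point by a uniform lower bound $N\log\rho$ outside the balls, and Stage~2 handles the delicate regime $\bm{d}$ near $\bm{x}_j$ by balancing the singular term $\log(1+r/\delta)$ against the Lipschitz perturbation $Cr$ via the monotonicity of $g(r)=\log(1+r/\delta)-Cr$ on $[0,\rho]$ once $\delta+\rho<1/C$. The resulting $\delta^*$ is explicit (take $\rho<\epsilon/(4(N-1))$, then $\delta^*=\min(\delta_1,1/(2C))$), which the paper's approach does not yield. In short, the paper's proof captures the right intuition---the $\log\delta$ singularity dominates---but your Stage~2 is the missing ingredient that makes the argument uniform and hence complete.
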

\noindent The proof is given in Appendix C.

The insight from this theorem is that, for each cluster, it allows us to search for the optimal center \textit{within} the points assigned to this cluster. In particular, using Theorem \ref{thm:lp_minima}, the optimization problem in \eqref{eq:clusters} can be written as:
\begin{equation} \label{eq_logd}
\argmin_{\bm{d}_1,...,\bm{d}_n;\bm{d}_i \in \mathcal{X}_i; {\bm{W}}} \sum_{i=1}^{n} \Big[\log \delta + \sum_{j = 1;\bm{x}_{j} \ne \bm{d}_i}^{N} w_{ij} \log \big[\left\Vert\left. \bm{x}_{j}-\bm{d}_i \right\Vert\right._2 + \delta \big]\Big],
\end{equation}
where $\mathcal{X}_i=\{\bm{x}_{j} | w_{ij}=1,1 \leq j \leq N\}$ is the set of points assigned to the $i$th cluster. The $\log \delta$ term in \eqref{eq_logd} results from the fact that $\bm{d}_i$ is also a data point in the $i$th cluster. Since $\log\ \delta$ is constant and $\delta \ll1$, (\ref{eq_logd}) can further be simplified to obtain the cluster centers $\mathcal{D}_{n,0}$ in the finite data setting ($N< \infty$) as:
\begin{equation} 
\mathcal{D}_{n,0} = \argmin_{\bm{d}_1,...,\bm{d}_n;\bm{d}_i \in \mathcal{X}_i; {\bm{W}}} \sum_{i=1}^{n} \sum_{j = 1;\bm{x}_{j} \ne \bm{d}_i}^{N} w_{ij} \log \left\Vert\left. \bm{x}_{j}-\bm{d}_i \right\Vert\right._2.
\label{eq:eq_dc_rep}
\end{equation}

Hence, given fixed assignment variables $\bm{W}$, each cluster center $\bm{d}_i$ can then be updated by the discrete optimization problem:
\begin{equation} 
\bm{d}_i \leftarrow \argmin_{\bm{d} \in \mathcal{X}_{i}} \sum_{j = 1;\bm{x}_{ij} \ne \bm{d}}^{N_i} \log \left\Vert\left. \bm{x}_{ij}-\bm{d} \right\Vert\right._2, \quad i = 1, \cdots, n,
\label{eq:dc_asymp_step2}
\end{equation}
where $N_i$ is the number of points and $\bm{x}_{ij}$ is the $j$th point, assigned to the $i$th clutser ($w_{ij} = 1$). This can be efficiently optimized via several heuristics, which we discuss next.

We now summarize the full optimization algorithm for \eqref{eq:eq_dc_rep} (called \texttt{DC\_asymp}) in Algorithm 1. First, initial cluster centers are randomly sampled from the data. Next, repeat the following two steps until convergence: (a) assign each data point to its nearest cluster center, and (b) update each cluster center to minimize the log-potential criterion \eqref{eq:dc_asymp_step2}. {Although step (b) solves a discrete optimization problem, its computation time can be greatly reduced using the following heuristic. The key idea is that we do not need to compute the objective over every data point in the cluster (or $\bm{d} \in \mathcal{X}_{i}$). Since the log-potential cluster center in \eqref{eq:dc_asymp_step2} measures cluster centrality, one would expect it to be near the sample mean of all the points in the cluster. Thus, one way to reduce computation is to compute the objective for only the $r\%$ points closest to the mean (for a small choice of $r$) and screen out the remaining $(1-r)\%$ of points. The choice of $r = 10\%$  seems to work well in our numerical examples.

\begin{algorithm}[t]
    Sample initial cluster centers ${\{\bm{d}_i\}}_{i=1}^{n}$ from the data $\mathcal{X}$, and set $\mathcal{D}_{n,0} \leftarrow {\{\bm{d}_i\}}_{i=1}^{n}$\\   
    \While {$\mathcal{D}_{n,0}$ does not converge}{
     {Step 1: Assign} each data point in $\mathcal{X}$ to nearest cluster under $\left\Vert\left.. \right\Vert\right._2$\\
     {Step 2: Update} cluster centers ${\{\bm{d}_i\}}_{i=1}^{n}$ to $\mathcal{D}_{n,0}$ by solving \eqref{eq:dc_asymp_step2} for each cluster\\
}   	
\caption{\texttt{DC\_asymp $(\mathcal{X}, n)$}\label{A1}}
\end{algorithm}

We also note an interesting connection between distributional clustering and k-medoids clustering \citep{kaufman1987clustering}. The cluster centers in k-medoids optimize the same criterion as k-means, but are restricted to be within the data itself (which is similar to \texttt{DC\_asymp}). However, distributional clustering enjoys the same advantage over k-medoids as it does over k-means: the proposed cluster centers capture the data distribution asymptotically, whereas this is not true for k-medoids clustering. Hence, the proposed method is expected to perform better in applications where cluster prototypes should be representative of the data distribution.

\subsection{Optimizing the within-cluster sum-of-powers criterion}

Note that Theorem \ref{thm:conv} guarantees the proposed cluster centers converge in distribution to the data distribution as the number of centers $n \rightarrow \infty$. In the case of small $n$, however, an adjustment can be performed to provide improved distribution representation. To see why, Fig. \ref{fig:finite_n_effect_n} (left) shows the kernel density estimate of cluster centers on 1-dimensional standard normal data for different values of $n$, with $k$ fixed at 0. As $n$ decreases, the optimal cluster centers tend to move towards the high-density regions of the data, which makes the estimated density function less heavy tailed. One reason for this is that, as we minimize the log-potential between the cluster center and cluster points, there is lesser incentive to minimize the distance between the cluster center from a far-off point than in minimizing it from many nearby points. Thus, for small $n$, cluster centers are pushed towards high-density regions, which induces a finite-sample distribution distortion (note that this distortion disappears as $n \rightarrow \infty$). We refer to this distortion as a ``small-$n$ distortion''.

\begin{figure}[t]
\centering
\includegraphics[width=1\textwidth]{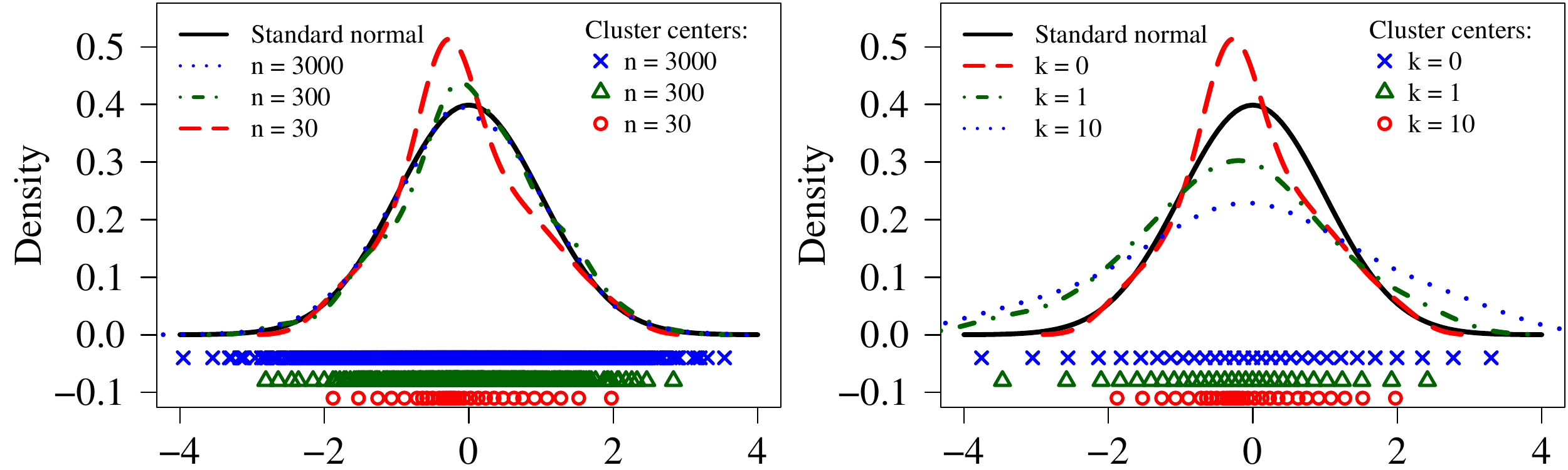}
\captionsetup{width=1\textwidth}
\caption{(left): Effect of $n$ on distribution of cluster centers for $k = 0$; (right): Effect of $k$ on distribution of cluster centers for $n = 30$.}
\label{fig:finite_n_effect_n}
\end{figure}

Recall that the within-cluster sum-of-log-potential criterion \eqref{eq:eq_dc_rep} is a limiting case of the within-cluster sum-of-powers criterion \eqref{eq:wcsp}. To correct the small-$n$ distortion in the former, one strategy is to optimize the latter with an appropriately chosen power $k$.  Figure \ref{fig:finite_n_effect_n} (right) shows the kernel density estimate of the optimal cluster centers on 1-dimensional standard normal data for different values of $k$, with $n$ fixed at 30. As $k$ increases, the density of the cluster centers becomes more heavy tailed, which is opposite of the effect of the small-$n$ distortion! This suggests that the small-$n$ distortion can be corrected by increasing power $k$ in equation \eqref{eq:wcsp}. Intuitively, one reason for this is that, as $k$ increases, the incentive is higher in minimizing the distance of the cluster center from a far away point rather than from many nearby points. This then encourages cluster centers to move towards low density regions, which leads to cluster centers being more spread out. 

Adopting the above adjustment strategy, we present next an optimization algorithm, called \texttt{DC\_finite} (see Algorithm 2), for the within-cluster sum-of-powers criterion \eqref{eq:wcsp} with fixed $k$ (a tuning procedure for $k$ is discussed in the next section). First, initial cluster centers are randomly sampled from the data. Next, the following two steps are iterated until convergence: (a) assign each data point to its nearest cluster center, and (b) update each cluster center to minimize the sum-of-powers within each cluster. In particular, step (b) performs the following update:
\begin{equation} 
\bm{d}_i = \argmin_{\bm{d}} \sum_{j = 1}^{N_i} \left\Vert\left. \bm{x}_{ij}-\bm{d} \right\Vert\right._2^k, \quad i = 1, \cdots, n.
\label{eq:dc_finite_step2}
\end{equation}
For $k \geq 1$, the optimization in \eqref{eq:dc_finite_step2} is convex, so a global optimum can be obtained via the truncated Newton method \citep{dembo1983truncated}. In our implementation, this optimization is performed using the R package `nloptr' \citep{ypma2014introduction}. A similar approach was used by \cite{mak2018minimax} for the case of $k$ large, but for a different goal of experimental design.

\begin{algorithm}[t]
   Sample initial cluster centers ${\{\bm{d}_i\}}_{i=1}^{n}$ from the data $\mathcal{X}$, and set $\mathcal{D}_{n,k} \leftarrow {\{\bm{d}_i\}}_{i=1}^{n}$\\ 
    \While {$\mathcal{D}_{n,k}$ does not converge}{
    {Step 1: Assign} each data point in $\mathcal{X}$ to nearest cluster under $\left\Vert\left.. \right\Vert\right._2$\\
   {Step 2: Update} cluster centers ${\{\bm{d}_i\}}_{i=1}^{n}$ to $\mathcal{D}_{n,k}$ by solving \eqref{eq:dc_finite_step2} for each cluster\\
}
\caption{\texttt{DC\_finite $(\mathcal{X}, n, k)$}\label{A2}}
\end{algorithm}

\subsection{Distributional clustering algorithm}

Finally, we present a way to tune the power $k$ in \eqref{eq:wcsp} to best correct the small-$n$ distortion. We will make use of the following energy distance \citep{szekely2004testing}. Given data $\mathcal{X} = {\{\bm{x}_j\}}_{j=1}^{N}$ and optimal cluster centers $\mathcal{D}_{n,k} = {\{\bm{d}_i\}}_{i=1}^{n}$, the energy distance between $\mathcal{X}$ and $\mathcal{D}_{n,k}$ is defined as:
\begin{equation} \label{energy_emp}
\begin{aligned}
E(\mathcal{X}, \mathcal{D}_{n,k}) = \frac{2}{nN} \sum_{i=1}^{n} \sum_{j=1}^{N} \left\Vert\left. \bm{x}_j - \bm{d}_i \right\Vert\right._2 -  \frac{1}{N^2}\sum_{i=1}^{N}  \sum_{j=1}^{N} \left\Vert\left. \bm{x}_i - \bm{x}_j \right\Vert\right._2 - \\ 
\frac{1}{n^2}\sum_{i=1}^{n}  \sum_{j=1}^{n} \left\Vert\left. \bm{d}_i - \bm{d}_j \right\Vert\right._2.
\end{aligned}
\end{equation}
\noindent This energy distance was initially proposed as a two-sample goodness-of-fit test between two datasets $\mathcal{X}$ and $\mathcal{D}_{n,k}$. Here, we do not use this criterion for goodness-of-fit testing, but rather to tune a good choice of power $k$ which maximizes goodness-of-fit between data $\mathcal{X}$ and cluster centers $\mathcal{D}_{n,k}$. More specifically, we wish to find the power $k^*$ which satisfies:
\begin{equation} \label{optimal_k}
k^* = \argmin_k E(\mathcal{X}, \mathcal{D}_{n,k}).
\end{equation}

In implementation, $k^*$ is estimated as follows. First, beginning with the initial case of $k=0$, we generate the optimal cluster centers $D_{n,0}$ using the algorithm \texttt{DC\_asymp} in Section 4.1, and compute its energy distance to data $\mathcal{X}$. Next, we iteratively increase power $k$ by $\Delta = 0.5$, starting from $k=1$, then generate the optimal cluster centers $D_{n,k}$ using the algorithm \texttt{DC\_finite} in Section 4.2, and compute its energy distance to the data. We increment $k$ as long as the computed energy distance decreases, and terminate the procedure when it increases for a new power $k$. Finally, we take the optimal $k^*$ as the power prior to an increase in energy distance. From simulations (see Figure \ref{fig:optimalk}), the energy distance $E(\mathcal{X}, \mathcal{D}_{n,k})$ appears to be near-convex in $k$, which justifies this iterative tuning approach. Algorithm 3, which we call \texttt{DC}, outlines the full distributional clustering algorithm with power tuning.


Figure \ref{fig:optimalk} visualizes this tuning procedure for two toy examples. The left plot shows the plot of energy distance against $k$ for a 10-dimensional standard normal data with $N = 100,000$, and $n = 100$. Here, the tuned power is $k^* = 15$. In light of the discussion in Section 4.2, this large power is not surprising, since the number of representative points $n=100$ is quite small. The right plot shows the tuned power $k^*$ as a function of $n$, where the data is generated from a 9-dimensional multivariate standard normal distribution with $N=90,000$. As $n$ increases, the tuned power $k^*$ needed to correct this distortion decreases to zero, which makes sense since the small-$n$ distortion disappears as $n \rightarrow \infty$. This also supports the result in Theorem 2, that the distributional clustering centers converge to the data distribution as $n \rightarrow \infty$.

\begin{figure}[t]
\centering
\includegraphics[width=0.9\textwidth]{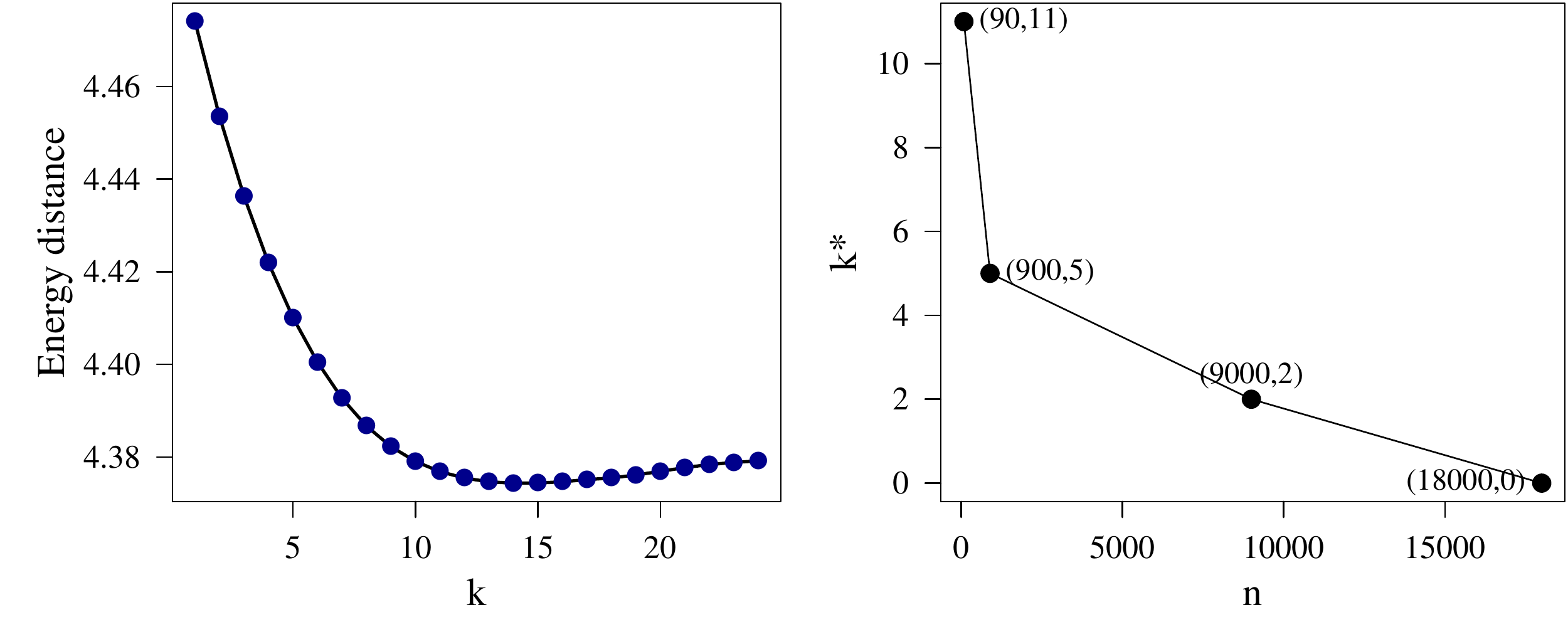}
\caption{(left): The energy distance $E(\mathcal{X}, \mathcal{D}_{n,k})$ as a function of power $k$; (right): The optimal power $k^*$ as a function of the number of clusters $n$.}
\label{fig:optimalk}
\end{figure}

\begin{algorithm}
 $\mathcal{D}_{n,0} \leftarrow \texttt{DC\_asymp}(\mathcal{X}, n) $\\
 $E_{cur} \leftarrow E(\mathcal{X},\mathcal{D}_{n,0})$\\
 $E_{prev} \leftarrow E_{cur} + 1$\\
 $k \leftarrow 1$\\
 \While {$E_{cur}<E_{prev}$}{
  $E_{prev} \leftarrow E_{cur}$\\
  $\mathcal{D}_{n,k} \leftarrow \texttt{DC\_finite}(\mathcal{X}, n, k)$\\
 $E_{cur} \leftarrow E(\mathcal{X},\mathcal{D}_{n,k})$\\
 $k \leftarrow k + \Delta$\\
}
 Output $\mathcal{D}_{n,k-\Delta}$
\caption{\texttt{DC $(\mathcal{X}, n,\Delta)$}\label{A3}}
\end{algorithm}

\setcounter{equation}{0} 
\section{Numerical examples}
We now investigate the performance of distributional clustering in two numerical examples. To provide a fair comparison, we will use the energy distance as well as another metric -- the multivariate Cram\'er statistic \citep{baringhaus2004new} -- to compare different reduction methods. The Cram\'er statistic between the data $\mathcal{X} = {\{\bm{x}_j\}}_{j=1}^{N}$, and cluster centers $\mathcal{D}_{n,k} = {\{\bm{d}_i\}}_{i=1}^{n}$ (for a given $k$) is:
\begin{equation} \label{cramer_stat}
\begin{aligned}
C(\mathcal{X}, \mathcal{D}_{n,k}) = \frac{nN}{N+n} \Bigg(\frac{2}{nN} \sum_{i=1}^{n} \sum_{j=1}^{N} \phi(\left\Vert\left. \bm{x}_j - \bm{d}_i \right\Vert\right._2^2) -  \frac{1}{N^2}\sum_{i=1}^{N}  \sum_{j=1}^{N} \phi(\left\Vert\left. \bm{x}_i - \bm{x}_j \right\Vert\right._2^2) - \\
\frac{1}{n^2}\sum_{i=1}^{n}  \sum_{j=1}^{n} \phi(\left\Vert\left. \bm{d}_i - \bm{d}_j \right\Vert\right._2^2)\Bigg),
\end{aligned}
\end{equation}
where $\phi$ is a kernel function. We have chosen $\phi(z) =  1-\exp(-z/2)$, as this kernel compares the distributions on both dispersion and location. The lower the Cram\'er-statistic, the closer the respective distributions.

\begin{example}
We compare the numerical performance of our distributional clustering algorithm with k-means clustering and random subsampling, on data simulated from the standard normal, exponential and gamma distributions, with dimensions varying from 2 to 8. Here, $N=100n$, and $n = 10p$. The energy distance in \eqref{energy_emp}, and the multivariate Cram\'er statistic in \eqref{cramer_stat} are used as metrics for evaluating the reduction methods. Figure \ref{fig:ex1} shows the energy distance and Cram\'er statistic for each of the three reduction methods, over the three distribution choices. We see that the converged cluster centers for distributional clustering have both the lowest energy distance and the lowest Cram\'er statistic, for all distributions and dimensions, which suggests that the proposed clustering method better captures the distribution of the underlying data compared to existing methods. In case of $p = 5$ for normal distribution, the energy distance is almost the same for $k=2$ and $k=3$, and so we observe a comparable performance of distributional clustering with k-means. For the tuned power $k^*$, we also observe larger $k^*$ for the normally-distributed data (with values increasing in dimension), but smaller $k^*$ for the exponential and gamma-distributed data, including $k^*=0$ for some cases. This suggests, while the clustering procedure under the within-cluster sum-of-log-potential criterion is asymptotically consistent for distribution matching, it may also be useful for problems with a small number of clusters $n$ (depending on the distribution of the data). However, in other cases, our other procedure is also important to ensure good distribution matching.

\begin{figure}[t]
\centering
\includegraphics[width=0.95\textwidth]{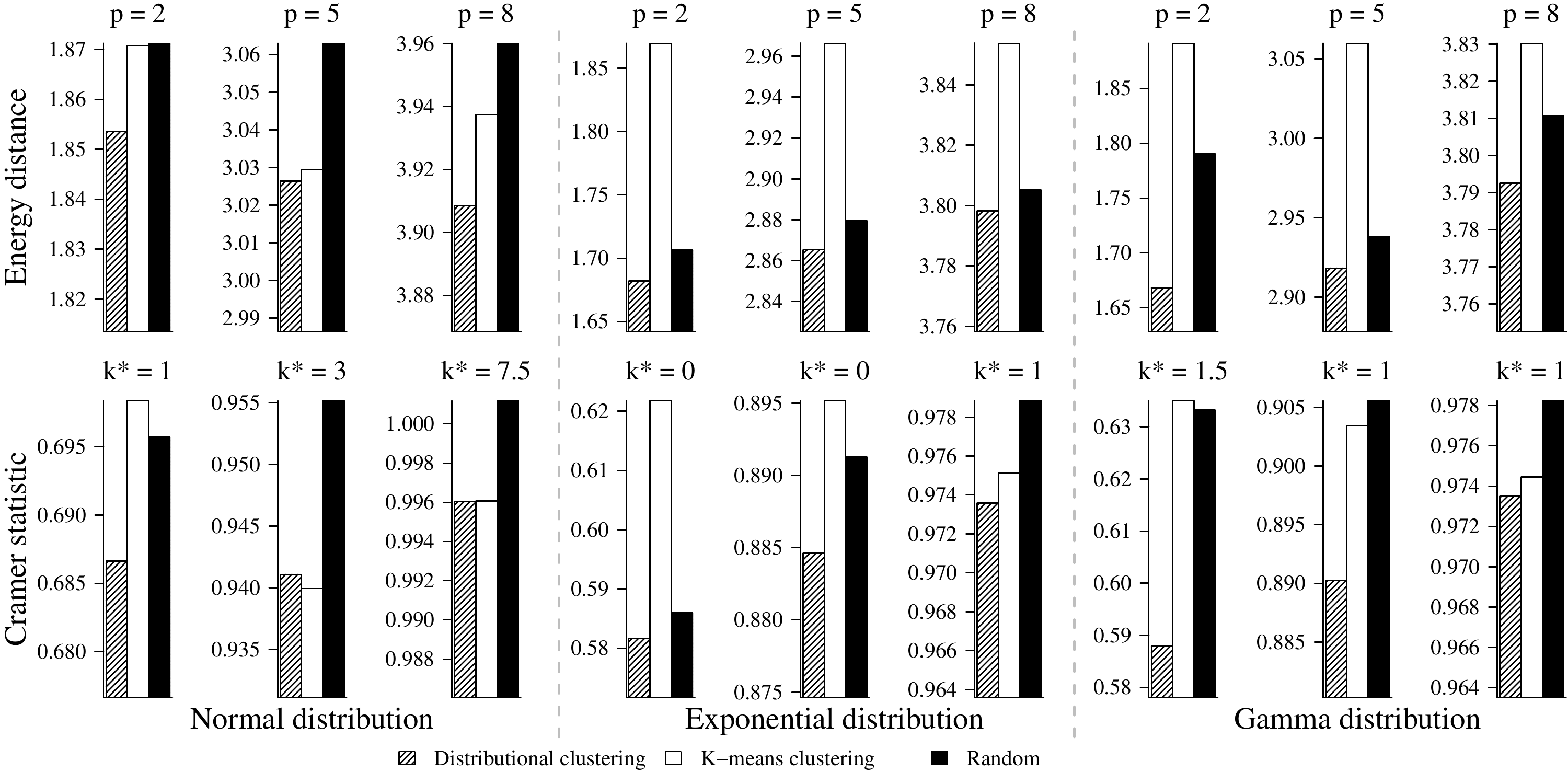}
\caption{Comparison of energy distances (top) and Cram\'er statistics (bottom) of distributional clustering, k-means and random subsampling, for standard normal, standard exponential and gamma (shape = 1, rate = 1) distributions.}
\label{fig:ex1}
\end{figure}
\end{example}

\begin{example}
High resolution regional climate models (RCMs) are driven from a comparatively coarse resolution global climate models for depicting high resolution future climate states of the atmosphere with associated uncertainty. Several targeted random sampling techniques have been developed (for example, \cite{rife2013selecting}) to select a subset of days from the coarse resolution dataset such that distribution of climate variables on those days matches with that of the entire population. Using such a representative sample provides a more economical and computationally feasible method of determining climate change uncertainty \citep{pinto2014regional}. Thus, we are motivated to apply distributional clustering to reduce climate data.

We demonstrate the effectiveness of distributional clustering on climate data (https://rattle.togaware.com/weatherAUS.csv) containing daily measurements of wind speed, humidity,  pressure and minimum temperature in Australia from October 2007 to June 2017. Here, $N=100,000$ and $n=100$. In order to show the effectiveness of distributional clustering on not just one sample, but in general for any sample, we randomly select 100 distinct initializing samples, and compare performance of all reduction methods on each of the 100 samples. Figure \ref{fig:ex3} shows the energy distance and Cram\'er statistic boxplots for the reduced samples from distributional clustering, k-means clustering, and random subsampling. We see that both the energy distances and Cram\'er statistics for distributional clustering are noticeably lower than those for the other two methods, which suggests that the proposed method again outperforms existing methods in terms of capturing the distribution of the underlying data. 


\begin{figure}[t]
\centering
\includegraphics[width=0.7\textwidth]{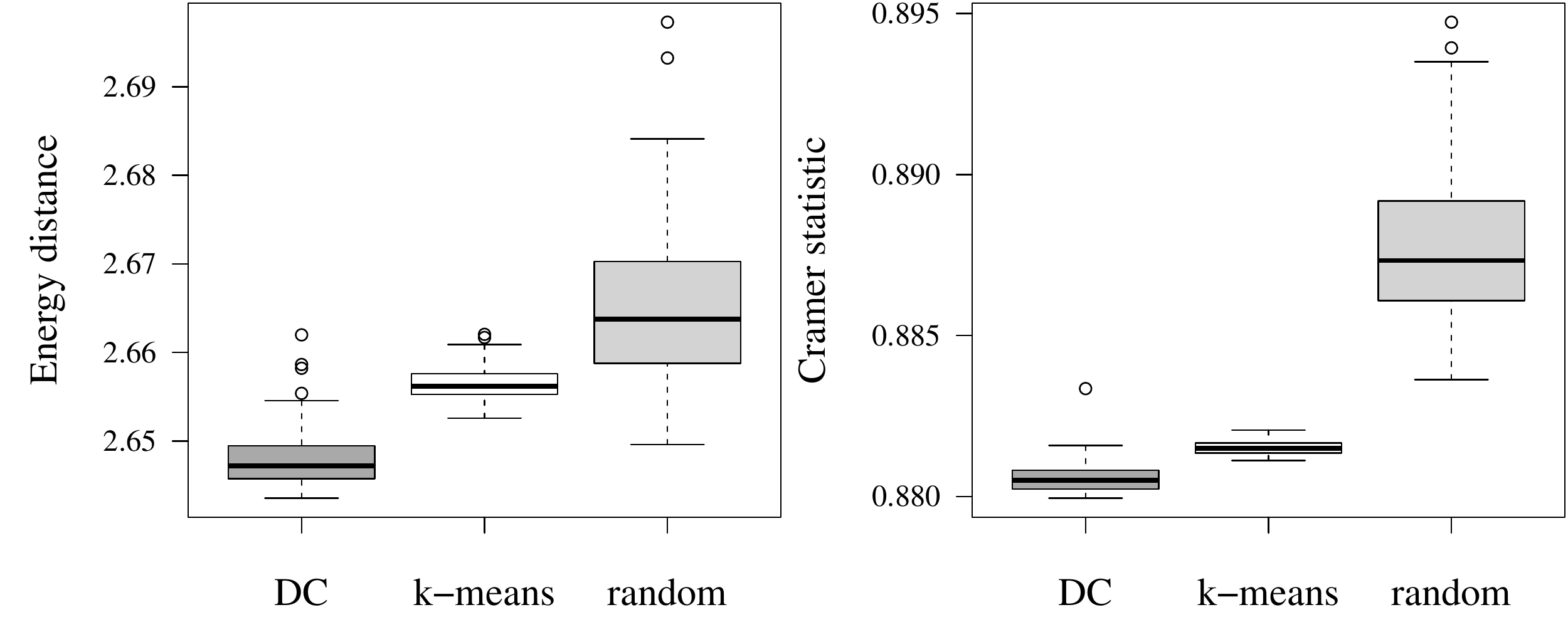}
\caption{Boxplots of the energy distances (left) and Cram\'er statistics (right) for 100 randomly chosen initializing samples for distributional clustering (DC), $k$-means and the initializing sample.}
\label{fig:ex3}
\end{figure}
\end{example}

\section{Conclusion}
K-means clustering is a widely-used approach for identifying cluster prototypes of an underlying dataset. One drawback, however, is that the k-means cluster centers (as representative points) distort the distribution of the data. To address this, we proposed a new distributional clustering method, which ensures the cluster centers indeed capture the data distribution. We proved the asymptotic convergence of the proposed cluster centers to the data generating distribution, then presented an efficient algorithm for computing such centers in practice. Numerical examples show that the proposed cluster prototypes provide a better representation of the underlying data, compared to k-means clustering and random subsampling. 

There are several interesting directions to pursue for future research. First, while the representative points from distributional clustering preserve the overall data distribution, it does not necessarily preserve \textit{marginal} distributions over each variable -- a property shown to be important for high-dimensional data reduction \citep{mak2017projected}. An extension of the proposed method for capturing marginal distributions would be worthwhile. Second, it would be nice to develop a more computationally efficient strategy for tuning the power $k^*$, which would allow our method to scale better for large datasets.



\vskip 14pt
\noindent {\large\bf Acknowledgements}

This research is supported by a U.S. National Science Foundation grant DMS-1712642.
\par


\vskip 14pt

\setcounter{section}{0}
\setcounter{equation}{0}
\setcounter{theorem}{0}
\def\theequation{A.\arabic{equation}}
\def\thesection{A\arabic{section}}

\noindent {\large\bf Appendix A: Proof of Theorem 1.}

\begin{theorem}[Within-cluster sum-of-limiting-power]
For any $\mathcal{D} \subseteq \mathbb{R}^p$, $\# \mathcal{D} = n$, we have:
\begin{equation}
V_0(\mathcal{D};f) = \exp\left\{ \int \log \|\bm{x} - Q(\bm{x};\mathcal{D})\|_2 f(\bm{x}) \; d\bm{x} \right\}.
\label{eq:v02_proof}
\end{equation}
\label{thm:v0_proof}
\end{theorem}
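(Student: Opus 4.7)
The plan is to take logarithms and recognize this as the classical passage from the power mean to the geometric mean in the limit $k \to 0^+$. Writing $g(\bm{x}) := \|\bm{x} - Q(\bm{x};\mathcal{D})\|_2$, we have $V_k(\mathcal{D};f) = \left(\int g(\bm{x})^k f(\bm{x})\, d\bm{x}\right)^{1/k}$, so it suffices to show that
\begin{equation}
\lim_{k \to 0^+} \log V_k(\mathcal{D};f) \;=\; \lim_{k \to 0^+} \frac{\log \int g(\bm{x})^k f(\bm{x})\, d\bm{x}}{k} \;=\; \int \log g(\bm{x})\, f(\bm{x})\, d\bm{x},
\end{equation}
since then exponentiating both sides yields \eqref{eq:v02_proof}.

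First I would check the indeterminate form. Since $f$ is a density and $g^k \to 1$ pointwise as $k \to 0^+$, under a dominated convergence argument we get $\int g^k f\, d\bm{x} \to 1$, so the numerator tends to $0$ while the denominator tends to $0$ as well. Next I would apply L'H\^opital's rule with respect to $k$. Differentiating the numerator requires interchanging $d/dk$ and the integral: formally,
\begin{equation}
\frac{d}{dk} \int g(\bm{x})^k f(\bm{x})\, d\bm{x} \;=\; \int g(\bm{x})^k \log g(\bm{x})\, f(\bm{x})\, d\bm{x}.
\end{equation}
After this interchange, the L'H\^opital limit becomes $\lim_{k \to 0^+} \frac{\int g^k \log g \cdot f\, d\bm{x}}{\int g^k f\, d\bm{x}}$, whose denominator tends to $1$ and whose numerator tends to $\int \log g \cdot f\, d\bm{x}$ (again by dominated convergence), yielding the desired formula. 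Equivalently, one can Taylor expand $g^k = 1 + k \log g + O(k^2 (\log g)^2)$ inside the integral, giving $\int g^k f\, d\bm{x} = 1 + k \int \log g \cdot f\, d\bm{x} + o(k)$, and then $\log V_k = \frac{1}{k} \log(1 + k\int \log g \cdot f\, d\bm{x} + o(k)) \to \int \log g \cdot f\, d\bm{x}$.

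The main obstacle is the justification of the two interchanges of limit and integral. Both boil down to finding an integrable envelope for $g(\bm{x})^k$ and $g(\bm{x})^k |\log g(\bm{x})|$ that is uniform in $k$ on a small right neighborhood of $0$. For the region where $g(\bm{x}) \ge 1$, one can bound $g^k \le 1 + g^{\xi}$ and $g^k|\log g| \le g^{\xi}|\log g| \lesssim 1 + g^{\xi+\varepsilon}$ for any small $\xi,\varepsilon > 0$, which is integrable against $f$ under a mild moment assumption analogous to (A1). For the region $g(\bm{x}) < 1$, the bound $g^k \le 1$ works for the first integral, while for the second $g^k|\log g| \le |\log g|$, and local integrability of $\log g$ against $f$ near the cluster centers holds in $\mathbb{R}^p$ for any continuous $f$ because $\int_{\|\bm{y}\| \le 1} |\log \|\bm{y}\||\, d\bm{y} < \infty$ in every dimension. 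Assembling these bounds justifies dominated convergence in both applications, completing the proof.
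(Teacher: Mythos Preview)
Your approach is essentially the same as the paper's: take logarithms, recognize the $0/0$ form, and apply L'H\^opital in $k$ to reach $\int \log g \cdot f\, d\bm{x}$. The paper's proof is purely formal and does not justify the interchange of differentiation and integral or the passage to the limit, so your dominated-convergence discussion in fact goes beyond what the paper provides.
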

\begin{proof}\textit{(Theorem \ref{thm:v0_proof})}.
We apply log transformation to (2.3), and then apply limit on $k$ to obtain $\lim_{k \rightarrow 0^+}  \log V_k(\mathcal{D}; f) = \lim_{k \rightarrow 0^+} [\{\log \int \|\bm{x} - Q(\bm{x};\mathcal{D})\|_2 f(\bm{x}) \; d\bm{x}\}/k]$. Using L'Hospital's Rule to compute the limit, we obtain $\lim_{k \rightarrow 0^+}  \log V_k(\mathcal{D}; f) = \lim_{k \rightarrow 0^+} [\{\int \|\bm{x} - Q(\bm{x};\mathcal{D})\|_2\ \log \|\bm{x} - Q(\bm{x};\mathcal{D})\|_2 f(\bm{x}) \; d\bm{x}\}/\{\int \|\bm{x} - Q(\bm{x};\mathcal{D})\|_2 f(\bm{x}) \; d\bm{x}\}]$. Computing the limit for $k \rightarrow 0^+$, we obtain (2.6), which completes the proof.
\end{proof}

\noindent {\large\bf Appendix B: Proof of Theorem 2.}
\def\theequation{B.\arabic{equation}}
\setcounter{equation}{0}

\begin{theorem}[Distributional convergence of distributional clustering]
Suppose $F$ satisfies the mild assumptions \textbf{(A1)} and \textbf{(A2)}. Then $F_{n,0}$, the e.d.f of $n$ distributional clustering centers, satisfies:\\
\newtheorem{assumpA1}{(A1)}
\newtheorem{assumpA2}{(A2)}

\begin{equation}
\lim_{n \rightarrow \infty} F_{n,0}(\bm{x}) = F(\bm{x}), \quad \forall \bm{x} \in \mathbb{R}^p.
\label{eq:conv_proof}
\end{equation}
\label{thm:conv_proof}
\end{theorem}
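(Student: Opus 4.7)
The approach is to combine two observations: first, the limiting asymptotic distribution $F_{\infty,k}$ collapses to $F$ itself at $k=0$, because its density is proportional to $f^{p/(p+0)} = f$; and second, the uniform-in-$k$ convergence assumed in (A2) lets one transfer the pointwise convergence $F_{n,k} \to F_{\infty,k}$ from Theorem 7.5 of \cite{GL2007} across the endpoint $k=0$. The plan is to interpose an auxiliary power $k > 0$ and bound $|F_{n,0}(\bm{x}) - F(\bm{x})|$ via a three-term triangle inequality, then send $n \to \infty$ and $k \to 0^+$ in the right order.

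Concretely, fix $\bm{x} \in \mathbb{R}^p$ and $\epsilon > 0$. For any $k \in (0, \xi)$, using $F_{\infty,0} = F$, I would write
\begin{equation*}
|F_{n,0}(\bm{x}) - F(\bm{x})| \leq |F_{n,0}(\bm{x}) - F_{n,k}(\bm{x})| + |F_{n,k}(\bm{x}) - F_{\infty,k}(\bm{x})| + |F_{\infty,k}(\bm{x}) - F_{\infty,0}(\bm{x})|.
\end{equation*}
The middle term is handled by (A2), which supplies $n_0$ such that for all $n \geq n_0$ and all $k \in [0, \xi)$, this term is at most $\epsilon/3$. The third term is purely analytic: $f^{p/(p+k)} \to f$ pointwise as $k \to 0^+$, and the normalizing constant $\int f^{p/(p+k)}\, d\bm{x} \to 1$ by a dominated convergence argument whose dominating function is built using (A1); this yields $F_{\infty,k} \to F_{\infty,0} = F$ pointwise, making this term $\leq \epsilon/3$ for sufficiently small $k$. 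For the first term, I would argue continuity of the optimal centers in $k$ at $k=0$ by combining continuity of $V_k$ in $k$ (available through Theorem 1) with a standard argmin-stability argument, obtaining $F_{n,k}(\bm{x}) \to F_{n,0}(\bm{x})$ as $k \to 0^+$ for each fixed $n$ at continuity points of $F_{n,0}$; this controls the first term by $\epsilon/3$ after shrinking $k$ further if necessary. Assembling the three bounds gives $|F_{n,0}(\bm{x}) - F(\bm{x})| \leq \epsilon$ for all $n \geq n_0$.

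The main obstacle I anticipate is verifying $\int f^{p/(p+k)}\, d\bm{x} \to 1$. Since $p/(p+k) < 1$, the integrand $f^{p/(p+k)}$ actually exceeds $f$ on the tails $\{f \leq 1\}$, so one cannot simply dominate by $f$ and apply Lebesgue dominated convergence. This is precisely where assumption (A1) enters: the moment bound $\mathbb{E}\|\bm{X}\|^{k+\delta} < \infty$ forces enough tail decay on $f$ that $\int f^{p/(p+k)}\, d\bm{x}$ is finite and continuous in $k$ at $0$; one concrete route is to split the integral over $\{f \geq 1\}$ (controlled by $\int f = 1$) and $\{f < 1\}$ (controlled by Hölder's inequality against a moment bound on $\|\bm{x}\|$). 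A secondary subtlety is the continuity of $F_{n,k}$ in $k$ at $k=0$ when the argmin is not unique, but this can be handled by a subsequence extraction and does not disturb the overall structure of the proof.
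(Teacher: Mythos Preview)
Your approach is essentially the same as the paper's: both prove the result by exchanging the limits $n \to \infty$ and $k \to 0^+$, with (A2) supplying the uniformity that makes the swap legitimate. The paper packages this via a Moore--Osgood-type double-limit theorem (Theorems 2.13 and 2.15 of \cite{habil2016double}) and invokes Scheff\'e's lemma for $F_{\infty,k} \to F$, whereas you unpack the same argument directly through a three-term triangle inequality; the step you single out as delicate---continuity of $F_{n,k}$ in $k$ at $k=0$ for fixed $n$---is precisely the hypothesis ``$\lim_{k \to 0^+} \tilde{p}_{n,k}$ exists'' that the paper simply asserts without further justification.
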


\begin{proof}\textit{(Theorem \ref{thm:conv_proof})}. 
First, we claim that for any $k$, (i) $\lim_{n \rightarrow \infty}F_{n,k}(\bm{x}) = F_{\infty,k}(\bm{x})$ for all $\bm{x} \in \mathbb{R}^p$. This follows directly from Theorem 7.5 in \cite{GL2007}, under the assumption (A1). This convergence is also uniform in $k \in [0,\xi)$ under assumption (A2). Next, applying Scheff\'e's lemma \citep{Sch1947}, it is easy to see that (ii) $\lim_{k \rightarrow 0^+}F_{\infty,k}(\bm{x}) = F(\bm{x})$ for all $\bm{x} \in \mathbb{R}^p$, since $\lim_{k \rightarrow 0^+} f^{p/(p+k)} (\bm{x})$ (the limiting density of $F_{\infty,k}$) converges to $f(\bm{x})$ (the density of $F$) almost everywhere in $\bm{x}$.\\

We now wish to use (i) and (ii) to prove \eqref{eq:conv_proof}. Choose any event $A \in \mathcal{B}(\mathbb{R}^p)$, the Borel $\sigma$-algebra on $\mathbb{R}^p$. Let $\tilde{p}_{n,k}$, $\tilde{p}_{\infty,k}$, $\tilde{p}_{n,0^+}$ and $\tilde{p}$ denote the probability of $A$ under $F_{n,k}$, $F_{\infty,k}$, $F_{n,0}$ and $F$, respectively. If the following lemma holds:
\begin{lemma}[Exchange of limits]
\begin{equation}
\lim_{k \rightarrow 0^+} \left\{ \lim_{n \rightarrow \infty} \tilde{p}_{n,k} \right\} = \lim_{n \rightarrow \infty} \left\{ \lim_{k \rightarrow 0^+} \tilde{p}_{n,k} \right\},
\label{eq:limex_proof}
\end{equation}
\label{lem:limex_proof}
\end{lemma}
\noindent then equation \eqref{eq:conv_proof} must hold, because the left-hand side equals $\tilde{p}$ by applying (i) and (ii), and the right-hand side implies \eqref{eq:conv_proof}.

\begin{proof}\textit{(Lemma \ref{lem:limex_proof})}.
From (i) and (ii), the iterated limit $\lim_{k \rightarrow 0^+} (\lim_{n \rightarrow \infty}\tilde{p}_{n,k}) = \tilde{p}$. Also, $\lim_{n \rightarrow \infty}\tilde{p}_{n,k}$ exists uniformly for $k \in [0,\xi)$ under assumption (A2). Then, by Theorem 2.15 of \cite{habil2016double}, the double limit $\lim_{k \rightarrow 0^+, n \rightarrow \infty}\tilde{p}_{n,k} = \tilde{p}$. Since $\lim_{k \rightarrow 0^+}\tilde{p}_{n,k}$ exists, and $\lim_{n \rightarrow \infty}\tilde{p}_{n,k}$ exists (from (i)), then, by Theorem 2.13 of \cite{habil2016double}, the iterated limit $\lim_{n \rightarrow \infty} (\lim_{k \rightarrow 0^+}\tilde{p}_{n,k}) = \tilde{p}$, which completes the proof.
\end{proof}
\end{proof}

\noindent {\large\bf Appendix C: Proof of Theorem 3.}
\def\theequation{D.\arabic{equation}}
\setcounter{equation}{0}
\begin{theorem}[Log potential minima]
For any $\mathcal{X} = \{\bm{x}_1,...,\bm{x}_N\}$, there exists a $\delta^*$, such that for all $\delta \in (0,\delta^*)$, the global minimum $\bm{d}^*$, of the log-potential $LP(\bm{d}) = \sum_{j=1}^{N} \log \Big[ \left\Vert\left. \bm{x}_j-\bm{d} \right\Vert\right._2 + \delta \Big]$, is found in $\mathcal{X}$.
\begin{proof}\textit{(Theorem \ref{thm:lp_minima_proof})}.
Log-potential at $\bm{d} = \bm{x}_k$, where $\bm{x}_k \in \{\bm{x}_1,...,\bm{x}_N\}$, is $LP(\bm{x}_k) = \log \delta  + \sum_{j \neq k; j = 1}^{N} \log \Big[ \left\Vert\left. \bm{x}_j-\bm{x}_k \right\Vert\right._2 + \delta \Big] $. Log potential at $\bm{d} = \bm{d'}$, where $\bm{d'} \not\in \{\bm{x}_1,...,\bm{x}_N\}$, is $LP(\bm{d'}) = \sum_{j=1}^{N} \log \Big[ \left\Vert\left. \bm{x}_j-\bm{d'} \right\Vert\right._2 + \delta \Big] $. Let $\sum_{j \neq k;j=1}^N \log \Big[ \left\Vert\left. \bm{x}_j-\bm{x}_k \right\Vert\right._2 + \delta \Big] = a_k, LP(\bm{d'}) = a_{d'}$, where $a_k \in \mathbb{R}$ and $a_{d'} \in \mathbb{R}$. Then $LP(\bm{x}_k) - LP(\bm{d'}) = \log \delta + a_k - a_{d'}$. For any $a_k \in \mathbb{R}$ and $a_{d'} \in \mathbb{R}$, there exists a $\delta_k \in (0,\delta_k^*)$ such that $\log \delta_k + a_k - a_{d'} < 0$, or $LP(\bm{x}_k) < LP(\bm{d'})$. Let $\delta^* = min(\delta_1^*,..., \delta_n^*)$. Then, for $\delta \in (0,\delta^*),  LP(\bm{x}_k) < LP(\bm{d'}) \ \forall \ k = 1,..., n$. This implies that all points in $\mathcal{X} = \{\bm{x}_1,...,\bm{x}_N\}$ correspond to local minima, or one of the points in $\mathcal{X}$ is the global minimum.
\end{proof}
\label{thm:lp_minima_proof}
\end{theorem}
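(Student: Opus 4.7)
The plan is to exploit the fact that as $\delta \to 0^+$, evaluating $LP$ at any $\bm{x}_k \in \mathcal{X}$ produces a $\log \delta$ term that diverges to $-\infty$, while evaluating $LP$ at a point bounded away from $\mathcal{X}$ stays bounded. Assume without loss of generality that the $\bm{x}_j$ are distinct and set $m = \min_{i \neq j}\|\bm{x}_i - \bm{x}_j\|_2 > 0$. The strategy is to partition $\mathbb{R}^p$ into a union of small disjoint balls around each $\bm{x}_k$ plus its complement, show that the global minimum must lie in one of the balls, and then argue within each ball that the minimum is attained at its center.

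First I would prove strict local minimality at each data point. Expanding $LP(\bm{x}_k + \bm{h}) - LP(\bm{x}_k)$, the ``cusp'' term $\log[\|\bm{h}\|_2 + \delta] - \log \delta = \log(1 + \|\bm{h}\|_2/\delta) \geq 0$ is nonnegative and grows rapidly in $\bm{h}$ for small $\delta$, while each smooth term $\log[\|\bm{x}_j - \bm{x}_k - \bm{h}\|_2 + \delta] - \log[\|\bm{x}_j - \bm{x}_k\|_2 + \delta]$ for $j \neq k$ is Lipschitz in $\bm{h}$ on a ball $B(\bm{x}_k,\epsilon)$ with constant of order $1/m$, provided $\epsilon < m/2$ so that $\|\bm{x}_j - \bm{d}\|_2 \geq m/2$ there. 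For $\delta$ small enough the cusp term dominates in both the near regime ($\|\bm{h}\|_2 \leq \delta$, using $\log(1+x) \geq x/2$ for $x \in [0,1]$) and the far regime ($\|\bm{h}\|_2 \in [\delta,\epsilon]$, using $\log(1+x) \geq \log 2$ for $x \geq 1$), yielding $LP(\bm{d}) \geq LP(\bm{x}_k)$ throughout $B(\bm{x}_k,\epsilon)$.

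Next I would control $LP$ away from $\mathcal{X}$. With $\epsilon$ chosen as above, any $\bm{d}$ outside $\bigcup_k B(\bm{x}_k,\epsilon)$ satisfies $\|\bm{x}_j - \bm{d}\|_2 \geq \epsilon$ for every $j$, so $LP(\bm{d}) \geq N \log \epsilon$, a bound independent of $\delta$. On the other hand $\min_k LP(\bm{x}_k) = \log \delta + O(1)$ as $\delta \to 0^+$, where the $O(1)$ constant depends only on $m$ and $N$; hence there exists $\delta^* > 0$ such that $\min_k LP(\bm{x}_k) < N \log \epsilon$ for all $\delta \in (0,\delta^*)$. Combined with the previous paragraph, the global minimum must lie in some $B(\bm{x}_k,\epsilon)$ and therefore equal $\bm{x}_k$.

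The main obstacle is the quantitative tradeoff in the local-minimality step: the comparison $\log(1+r/\delta) \geq C_N r/m$ must be arranged so that the thresholds $\epsilon$ (which must be small relative to $m$) and $\delta^*$ (which must be small relative to both $m/N$ and $\epsilon$) are mutually consistent. Once the elementary bounds for $\log(1+x)$ in the two regimes are applied, these thresholds can be written down explicitly as functions of $m$ and $N$, closing the argument.
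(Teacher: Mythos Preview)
Your proposal is correct and is in fact more careful than the paper's own argument. The paper simply writes $LP(\bm{x}_k)-LP(\bm{d}')=\log\delta + a_k - a_{\bm{d}'}$ for a single $\bm{d}'\notin\mathcal{X}$, observes that this is negative once $\delta$ is small, and then takes $\delta^*=\min_k \delta_k^*$; it never addresses the uniformity in $\bm{d}'$, so as stated the threshold would depend on the particular competitor $\bm{d}'$ (which is problematic when $\bm{d}'$ approaches some $\bm{x}_k$). Your ball decomposition is exactly what closes this gap: on the complement of $\bigcup_k B(\bm{x}_k,\epsilon)$ you get the uniform lower bound $N\log\epsilon$ that the $\log\delta$ term beats, while inside each ball your cusp-versus-Lipschitz comparison yields genuine local minimality at $\bm{x}_k$, so the infimum over $\bm{d}'\notin\mathcal{X}$ is handled cleanly. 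The cost is a slightly longer argument with explicit thresholds $\epsilon \asymp m/N$ and $\delta^*$ depending on $m$, $N$, and $\mathrm{diam}(\mathcal{X})$; the benefit is that the quantifiers are in the right order and the proof actually goes through.
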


\bibliography{references}  


\end{document}